\newtheorem{theorem}{Theorem}
\newtheorem{lemma}{Lemma}
\journal{Neural Networks}
\begin{document}

\begin{frontmatter}



\title{Negative-Free Self-Supervised Gaussian Embedding of Graphs}



\author[1]{Yunhui~Liu}
\ead{lyhcloudy1225@gmail.com}
\author[1]{Tieke~He\corref{c}}
\ead{hetieke@gmail.com}
\author[1]{Tao~Zheng}
\author[1]{Jianhua~Zhao}

\cortext[c]{Corresponding author}

\affiliation[1]{organization={State Key Laboratory for Novel Software Technology},
            addressline={Nanjing University}, 
            city={Nanjing},
            postcode={210023}, 
            country={China}}

\begin{abstract}
Graph Contrastive Learning (GCL) has recently emerged as a promising graph self-supervised learning framework for learning discriminative node representations without labels. The widely adopted objective function of GCL benefits from two key properties: \emph{alignment} and \emph{uniformity}, which align representations of positive node pairs while uniformly distributing all representations on the hypersphere. The uniformity property plays a critical role in preventing representation collapse and is achieved by pushing apart augmented views of different nodes (negative pairs). As such, existing GCL methods inherently rely on increasing the quantity and quality of negative samples, resulting in heavy computational demands, memory overhead, and potential class collision issues. In this study, we propose a negative-free objective to achieve uniformity, inspired by the fact that points distributed according to a normalized isotropic Gaussian are uniformly spread across the unit hypersphere. Therefore, we can minimize the distance between the distribution of learned representations and the isotropic Gaussian distribution to promote the uniformity of node representations. Our method also distinguishes itself from other approaches by eliminating the need for a parameterized mutual information estimator, an additional projector, asymmetric structures, and, crucially, negative samples. Extensive experiments over seven graph benchmarks demonstrate that our proposal achieves competitive performance with fewer parameters, shorter training times, and lower memory consumption compared to existing GCL methods. 
\end{abstract}



    
\begin{keyword}
Graph Neural Networks \sep Graph Representation Learning \sep Self-Supervised Learning \sep Graph Data Mining
\end{keyword}

\end{frontmatter}



\section{Introduction}
One of the current main bottlenecks in graph machine learning is the dependence on heavy annotated training data. Learning representations on graphs without manual labels offers considerable advantages, as highlighted in recent literature \cite{GSSLSurvey, DGRLSurvey}. Against this backdrop, Graph Self-Supervised Learning (GSSL) has emerged as a pivotal methodology, addressing this critical need. GSSL demonstrates capabilities that are on par with, or in some cases, surpass those of supervised learning methods, as evidenced by studies such as \cite{DGI, GRACE, HomoGCL, BLNN, iGCL, S2T, TGC, wu2024high}. Central to the GSSL approach is pre-training, which utilizes meticulously designed pretext objectives. These objectives are task-agnostic, enabling the optimization process to yield representations that are not only general and meaningful but also transferable across various downstream applications.

As a prominent branch of the GSSL family, graph contrastive learning (GCL) \cite{GRACE, GCA, gCooL, COSTA, CGRA, InfoAdv} has demonstrated remarkable performance and garnered widespread interest. GCL methods focus on learning node representations by creating two augmented views of the input graph and maximizing the mutual information between the encoded representations. Wang and Isola \cite{UCL} offer an intuitive and theoretical understanding of contrastive learning, emphasizing \emph{alignment} and \emph{uniformity} on the hypersphere. \emph{Alignment}, i.e., pulling together positive pairs, ensures that samples forming positive pairs are mapped to nearby representations, thus rendering them invariant to irrelevant noise factors. However, relying solely on alignment could lead to complete collapse, where all representations converge to a single point \cite{SimSam}. In such cases, the learned representations may exhibit optimal alignment but fail to provide meaningful information for downstream tasks. \emph{Uniformity} assesses how representations are uniformly distributed across the hypersphere, with higher uniformity indicating that more information is preserved in the learned representations. Therefore, \emph{uniformity} plays a pivotal role in alleviating complete collapse and generating discriminative representations \cite{UCL}.

However, existing GCL methods achieve uniformity by pushing apart augmented views of different nodes (negative pairs). As such, they inherently rely on both the quantity and quality of negative samples. This reliance results in substantial computational and memory overhead, as well as class collision, where different samples from the same class are inaccurately deemed negative pairs, thereby impeding representation learning \cite{TACL, BGRL}. To address these issues, recent self-supervised methods \cite{CCASSG, BGRL, G-BT, AFGRL} have explored the prospect of learning without negative samples. Specifically, CCA-SSG \cite{CCASSG} and G-BT \cite{G-BT} learn augmentation-invariant information while introducing feature decorrelation to capture orthogonal features and prevent model collapse. However, they may not work well on datasets with low feature dimensions, as they essentially perform dimension reduction. BGRL \cite{BGRL} and AFGRL \cite{AFGRL} introduce an online network along with a target network to avoid collapse. But they require additional components, e.g., an exponential moving average (EMA) and Stop-Gradient, leading to a more intricate architecture.

Different from prior works on contrastive learning, we propose Negative-Free Self-Supervised Gaussian Embedding of Graphs (SSGE), a simple yet effective approach that introduces a new negative-free self-supervised learning objective while liberating the model from intricate designs. Following established practices, SSGE generates two views of an input graph through random augmentation and obtains node representations via a shared Graph Neural Network (GNN) encoder. Moreover, our contribution lies in proposing a negative-free self-supervised learning objective. Specifically, this new objective seeks to maximize the agreement between two augmented views of the same input (\emph{alignment}) while simultaneously minimizing the distance between the distribution of learned representations and the isotropic Gaussian distribution (\emph{uniformity}). Our motivation is grounded in the fact that the normalized isotropic Gaussian distributed points are uniformly distributed on the unit hypersphere. The proposed objective yields a simple and light model without depending on negative pairs \cite{GRACE, GCA, gCooL}, a parameterized mutual information estimator \cite{DGI, MVGRL}, an additional projector \cite{GRACE, GCA, BGRL, AFGRL}, or asymmetric architectures \cite{BGRL, AFGRL, GraphALU, BLNN}. Our model also works better on low-dimensional datasets than other simple models \cite{CCASSG, G-BT}. Extensive experiments on node classification and node clustering demonstrate that our model consistently achieves highly competitive performance. Furthermore, our method exhibits advantages such as fewer parameters, shorter training times, and lower memory consumption when compared to existing GCL methods. The implementation code is available at \url{https://github.com/Cloudy1225/SSGE}. To sum up, our contributions are as follows:
\begin{itemize}
    \item A negative-free uniformity objective is proposed, which is inspired by the fact that points distributed according to a normalized isotropic Gaussian are uniformly spread across the unit hypersphere.

    \item The proposed objective liberates the self-supervised learning model from the reliance on negative samples and intricate components, including a parameterized mutual information estimator, an additional projector, or asymmetric architectures.

    \item Extensive experiments across seven graph datasets and two downstream tasks demonstrate that our model achieves competitive performance with fewer parameters, shorter training times, and lower memory consumption than existing GCL methods.
\end{itemize}

\section{Related Works}
Recently, numerous research efforts have been devoted to multi-view graph self-supervised learning, which optimizes model parameters by ensuring consensus among multiple views derived from the same sample under different graph augmentations \cite{GSSLSurvey}. A crucial aspect of these methods is the prevention of trivial solutions, where all representations converge either to a constant point (i.e., complete collapse) or to a subspace (i.e., dimensional collapse). The existing methods can be broadly classified into two groups: contrastive and non-contrastive approaches, each delineated by its strategy for mitigating model collapse.

\textbf{Contrastive-based methods} typically follow the criterion of mutual information maximization \cite{DIM}, whose objective functions involve contrasting positive pairs with negative ones. Pioneering works, such as DGI \cite{DGI} and GMI \cite{GMI}, focus on unsupervised representation learning by maximizing mutual information between node-level representations and a graph summary vector. MVGRL \cite{MVGRL} proposes to learn both node-level and graph-level representations by performing node diffusion and contrasting node representations to augmented graph representation. GRACE \cite{GRACE} and its variants like GCA \cite{GCA}, gCooL \cite{gCooL}, and COSTA \cite{COSTA} learn node representations by pulling together the representations of the same node (positive pairs) in two augmented views while pushing away the representations of the other nodes (negative pairs) in two views. AUAR \cite{AUAR} aligns the representations of the node with itself and its cluster centroid while maximizing the distance between nodes and each cluster centroid. DirectCLR \cite{DirectCLR} directly optimizes the representation space without relying on a trainable projector to mitigate the dimensional collapse in contrastive learning.

\textbf{Non-contrastive methods} eliminate the use of negative samples and adopt different strategies to avoid collapsed solutions. Distillation-based methods BGRL \cite{BGRL}, AFGRL \cite{AFGRL} introduce an online network along with a target network, where the target network is updated with a moving average of the online network to avoid collapse. GraphALU \cite{GraphALU} further captures the uniformity by maximizing the distance between any nodes and a virtual center node. RGRL \cite{RGRL} learns augmentation-invariant relationship, which allows the node representations to vary as long as the relationship among the nodes is preserved. Feature decorrelation methods CCA-SSG \cite{CCASSG} and G-BT \cite{G-BT} rely on regularizing the empirical covariance matrix of the representations to capture orthogonal features and prevent dimensional collapse. W-MSE \cite{W-MSE} whitens and projects embeddings to the unit sphere before maximizing cosine similarity between positive samples.

Although these methods have demonstrated impressive performance, their reliance on intricate designs and architectures is noteworthy. For example, DGI \cite{DGI}, GMI \cite{GMI} and MVGRL \cite{MVGRL} rely on a parameterized Jensen-Shannon mutual information estimator \cite{f-GAN} for distinguishing positive node-graph pairs from negative ones. GRACE \cite{GRACE}, GCA \cite{GCA}, gCooL \cite{gCooL}, and COSTA \cite{COSTA} harness an additional MLP-projector to guarantee sufficient capacity. Moreover, they necessitate a substantial number of negative samples to prevent model collapse and learn discriminative representations, making them suffer seriously from heavy computation, memory overhead, and class collision \cite{TACL}. BGRL \cite{BGRL}, AFGRL \cite{AFGRL}, and GraphALU \cite{GraphALU} require asymmetric encoders, an exponential moving average and Stop-Gradient, to empirically avoid degenerated solutions, resulting in a more complex architecture. CCA-SSG \cite{CCASSG} and G-BT \cite{G-BT} may exhibit suboptimal performance on datasets where input data does not have a large feature dimension, as they are essentially performing dimension reduction. In contrast, our method aims to make the distribution of learned representations close to the isotropic Gaussian distribution to achieve uniformity while aligning the representations of two views from data augmentation.

\section{Preliminary}
\subsection{Problem Statement}
Let $\mathcal{G} = (\mathcal{V}, \mathcal{E})$ represent an attributed graph, where $\mathcal{V} = \{ v_1, v_2, \cdots, v_n\}$ and $\mathcal{E} \subseteq \mathcal V \times \mathcal V$ denote the node set and the edge set, respectively. The graph $\mathcal{G}$ is associated with a feature matrix $\boldsymbol{X} \in \mathbb{R}^{n \times p}$, where $\boldsymbol{x}_i \in \mathbb{R}^p$ represents the feature of $v_i$, and an adjacency matrix $\boldsymbol{A} \in \{ 0,1 \}^{n \times n}$, where $\boldsymbol{A}_{i,j} = 1$ if and only if $(v_i, v_j) \in \mathcal{E}$. In the self-supervised training setting, no task-specific labels are provided for $\mathcal{G}$. The primary objective is to learn an embedding function $f_\theta(\boldsymbol{A}, \boldsymbol{X})$ that transforms $\boldsymbol{X}$ to $\boldsymbol{Z}$, where $\boldsymbol{Z} \in \mathbb{R}^{n \times d}$ and $d \ll p$. The pre-trained representations aim to capture both attribute and structural information inherent in $\mathcal{G}$ and are easily transferable to various downstream tasks, such as node classification and node clustering.

\subsection{Graph Convolutional Network}
The Graph Convolutional Network (GCN) \cite{GCN} is one of the most popular graph neural networks. It is a layer-wise propagation rule-based model to learn the node representation $\boldsymbol{z}_i \in \mathbb{R}^d$ corresponding to node $v_i$. The formulation of the graph convolutional layer can be expressed as:
\begin{equation}
    \boldsymbol{Z}^{(l+1)} = \sigma \left(\boldsymbol{\hat{D}}^{-1/2} \boldsymbol{\hat{A}} \boldsymbol{\hat{D}}^{-1/2} \boldsymbol{Z}^{(l)} \boldsymbol{\Theta}^{(l)} \right),
\end{equation}
where $\boldsymbol{Z}^{(l+1)}$ denotes the node representations at the $l+1$ layer and $\boldsymbol{Z}^{(0)}$ represents the original attribute matrix of nodes. $\boldsymbol{\hat{D}}^{-1/2} \boldsymbol{\hat{A}} \boldsymbol{\hat{D}}^{-1/2}$ is a symmetric normalization of $\boldsymbol{A}$ with self-loop, $\boldsymbol{\hat{A}} = \boldsymbol{A}+\boldsymbol{I}$. $\boldsymbol{I}$ and $\boldsymbol{\hat{D}}$ are the identity matrix and the diagonal node degree matrix of $\boldsymbol{\hat{A}}$, respectively. Additionally, $\boldsymbol{\Theta}^{(l)}$ represents the weight matrix at the $l$-th layer, and $\sigma$ denotes the activation function. Consistent with previous works \cite{DGI, GRACE, CCASSG, gCooL, HomoGCL}, we adopt GCN as the foundational graph encoder.

\subsection{Wasserstein Distance}
Wasserstein distances are metrics quantifying the dissimilarity between probability distributions, drawing inspiration from the optimal transportation problem \cite{WGAN}. The $p$-Wasserstein distance is formulated as follows:
\begin{equation}
    \mathcal{W}_p(\mathbb{P}_r, \mathbb{P}_g) = \left( \inf_{\gamma \in \Pi (\mathbb{P}_r, \mathbb{P}_g)} \mathbb{E}_{(x,y) \sim \gamma} \left[ \|x-y\|^p \right] \right)^{\frac{1}{p}}, 
\end{equation}
where $\Pi (\mathbb{P}_r, \mathbb{P}_g)$ is the set of all joint distributions $\gamma(x,y)$ whose marginals are $\mathbb{P}_r$ and $\mathbb{P}_g$, respectively. The term $\gamma(x,y)$ intuitively indicates the amount of ``mass" requiring transportation from $x$ to $y$ for transforming the distribution $\mathbb{P}_r$ into $\mathbb{P}_g$. The Wasserstein distance thus represents the ``cost" associated with the optimal transport plan.

In the case of considering both distributions as multivariate Gaussian distributions, i.e., $\mathbb{P}_r=\mathcal{N}(\boldsymbol{\mu}_1, \boldsymbol{\Sigma}_1)$ and $\mathbb{P}_g=\mathcal{N}(\boldsymbol{\mu}_2, \boldsymbol{\Sigma}_2)$, with mean vectors
$\boldsymbol{\mu}_1$, $\boldsymbol{\mu}_2$, and covariance matrices $\boldsymbol{\Sigma}_1$, $\boldsymbol{\Sigma}_2$, respectively, the $2$-Wasserstein distance has a closed form expression given by
\begin{equation}
     \mathcal{W}_2^2(\mathbb{P}_r, \mathbb{P}_g) = \|\boldsymbol{\mu}_1-\boldsymbol{\mu}_2\|_2^2+\operatorname{Tr}\left( \boldsymbol{\Sigma}_1+\boldsymbol{\Sigma}_2 - 2\left( \boldsymbol{\Sigma}_2^\frac{1}{2} \boldsymbol{\Sigma}_1 \boldsymbol{\Sigma}_2^\frac{1}{2} \right)^\frac{1}{2} \right), \label{2-Wasserstein distance}
\end{equation}
where $\operatorname{Tr}(\cdot)$ denotes the trace of a matrix. This equation illustrates that the $2$-Wasserstein distance between two Gaussian distributions can be easily computed.

\section{Methodology}
\subsection{Model Framework}
Our model is simply constructed with three key components: 1) a random graph augmentation generator $\mathcal{T}$, 2) a GNN-based graph encoder symbolized as $f_\theta$, where $\theta$ representing its parameters, and 3) a Gaussian distribution guided objective function. Figure \ref{Fig: Overview} is an illustration of the proposed model.

\begin{figure*}[!ht]
\centerline{\includegraphics[width=1.\linewidth]{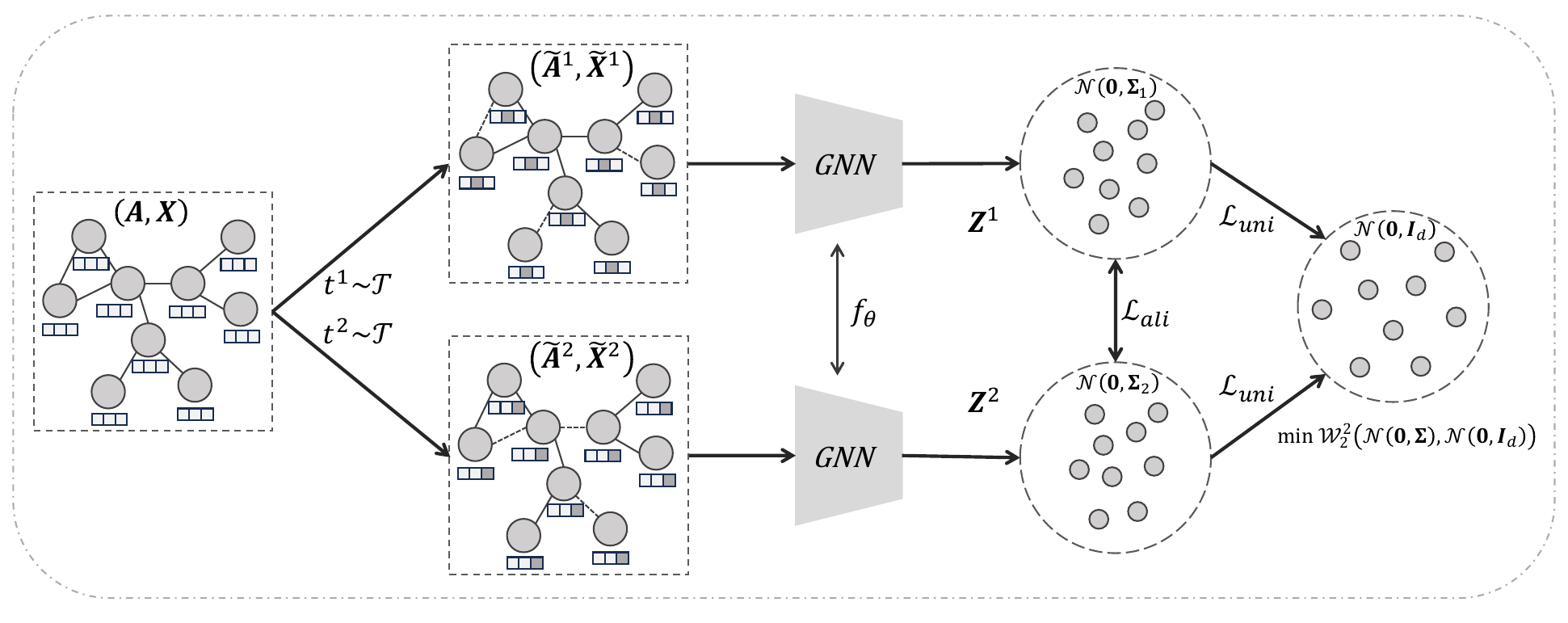}}\caption{Overview of our proposed graph self-supervised learning framework SSGE. For a given attributed graph $\left(\boldsymbol{A}, \boldsymbol{X}\right)$, we first generate two distinct views $(\boldsymbol{\widetilde{A^1}}, \boldsymbol{\widetilde{X^1}})$, $(\boldsymbol{\widetilde{A^2}}, \boldsymbol{\widetilde{X^2}})$ through random augmentations $t^1$, $t^2$. These two views are subsequently fed into a shared GNN encoder $f_\theta$ to extract batch-normalized node representations $\boldsymbol{Z}^1$, $\boldsymbol{Z}^2$. The \emph{alignment} loss $\mathcal{L}_{ali}$ and the \emph{uniformity} loss $\mathcal{L}_{uni}$ are applied on  $\boldsymbol{Z}^1$, $\boldsymbol{Z}^2$. Here, $\mathcal{L}_{uni}$ aims to minimize the 2-Wasserstein distance between the distribution of learned representations and the isotropic Gaussian distribution $\mathcal{N}\left( \boldsymbol{0}, \boldsymbol{I}_d\right)$.} \label{Fig: Overview}
\end{figure*}

\subsubsection{Graph Augmentation}
The augmentation of graph data is a critical component of graph contrastive learning, as it generates diverse graph views, resulting in more generalized representations that are robust against variance. In this study, we jointly adopt two widely utilized strategies, feature masking and edge dropping, to enhance graph attributes and topology information, respectively \cite{GRACE, GCA, CCASSG, BGRL, gCooL}. 

\textbf{Feature Masking.}\quad We randomly select a portion of the node features' dimensions and mask their elements with zeros. Formally, we first sample a random vector $\boldsymbol{\widetilde{m}} \in \{ 0, 1 \}^F$, where each dimension is drawn from a Bernoulli distribution with probability $1 - p_m$, i.e., $\widetilde{m}_i \sim \mathcal{B}(1 - p_m), \forall i$. Then, the masked node features $\widetilde {\boldsymbol{X}}$ are computed by $\parallel_{i=1}^N \boldsymbol{x}_i \odot \boldsymbol{\widetilde{m}}$, where $\odot$ denotes the Hadamard product and $\parallel$ represents the stack operation (i.e., concatenating a sequence of vectors along a new dimension).

\textbf{Edge Dropping.}\quad In addition to feature masking, we randomly drop a certain fraction of edges from the original graph. Formally, since we only remove existing edges, we first sample a random masking matrix $\boldsymbol{\widetilde{M}} \in \{ 0, 1 \}^{N \times N}$, with entries drawn from a Bernoulli distribution $\boldsymbol{\widetilde{M}}_{i,j} \sim \mathcal{B}(1 - p_d)$ if $\boldsymbol{A}_{i,j} = 1$ for the original graph, and $\boldsymbol{\widetilde{M}}_{i,j} = 0$ otherwise. Here, $p_d$ represents the probability of each edge being dropped. The corrupted adjacency matrix can then be computed as $\boldsymbol{\widetilde{A}} = \boldsymbol{A} \odot \boldsymbol{\widetilde{M}}$.

\subsubsection{Training and Inference}
During each training epoch, we first select two random augmentation functions, $t^1 \sim \mathcal{T}$ and $t^2 \sim \mathcal{T}$, where $\mathcal{T}$ is composed of all the possible graph transformation operations. Subsequently, two different views, $(\boldsymbol{\widetilde{A^1}}, \boldsymbol{\widetilde{X^1}}) = t^1(\boldsymbol{A},\boldsymbol{X})$ and $(\boldsymbol{\widetilde{A^2}}, \boldsymbol{\widetilde{X^2}})=t^2(\boldsymbol{A},\boldsymbol{X})$, are generated based on the sampled functions. These two augmented views are then fed into a shared encoder $f_\theta$ to extract the corresponding node representations: $\boldsymbol{Z}^1 = f_\theta(\boldsymbol{\widetilde{A^1}}, \boldsymbol{\widetilde{X^1}})$ and $\boldsymbol{Z}^2 = f_\theta(\boldsymbol{\widetilde{A^2}}, \boldsymbol{\widetilde{X^2}})$. To facilitate subsequent discussion, $\boldsymbol{Z}^1$ and $\boldsymbol{Z}^2$ are further batch-normalized so that each representation channel in which obeys a distribution with zero-mean and one-standard deviation. Finally, the model is optimized using some self-supervised objectives, such as InfoNCE \cite{InfoNCE} or our proposed objective defined in Eq. (\ref{Eq: Overall Objective}). After training, to obtain node representations for downstream tasks, the original graph $\mathcal{G} = (\boldsymbol{A}, \boldsymbol{X})$ is fed into the trained encoder $f_\theta$, yielding $\boldsymbol{Z} = f_\theta(\boldsymbol{A}, \boldsymbol{X})$.


\subsection{Negative-Free Self-Supervised Loss}
In this subsection, we first analyze the weakness of contrastive-based self-supervised methods and then propose a negative-free self-supervised loss.

\subsubsection{Weakness of Graph Contrastive Learning}
In most graph contrastive learning methods \cite{GRACE, GCA, gCooL, COSTA}, both positive pairs and negative pairs are required for learning a model. For instance, the widely adopted InfoNCE \cite{InfoNCE} loss has the following formulation:
\begin{equation}
    \mathcal{L}_{InfoNCE} = \sum_{i=1}^n \underbrace{-\boldsymbol{z}_i^1 \cdot \boldsymbol{z}_i^2 / \tau}_{\text{alignment}} +  \underbrace{\log \left( \sum_j e^ {\boldsymbol{z}_i^1 \cdot \boldsymbol{z}_j^2 / \tau} \right)}_{\text{uniformity}}, \label{Eq: InfoNCE}
\end{equation}
where $\boldsymbol{z}_i^1 \in \mathbb{R}^d$ and $\boldsymbol{z}_i^2 \in \mathbb{R}^d$ are the ($\ell_2$-normalized) representations of two views of the same sample $i$, and $\tau$ is the temperature hyperparameter. Minimizing Eq. (\ref{Eq: InfoNCE}) is equivalent to maximizing the cosine similarity of two views of the same sample (\emph{alignment}) and meanwhile minimizing the cosine similarity of two views of different samples (\emph{uniformity}). Intuitively, the \emph{alignment} term makes the positive pairs close to each other, while the \emph{uniformity} term distributes all samples roughly uniformly on the hypersphere $\mathcal{S}^{d-1}$ \cite{UCL}. Based on this analysis, Wang and Isola \cite{UCL} propose a uniformity metric by utilizing the logarithm of the average pairwise Gaussian potential:
\begin{equation}
    \mathcal{L}_{uniform} = \log \frac{1}{n(n-1)/2} \sum_{i=2}^n \sum_{j=1}^{i-1} e^{(-t\|\boldsymbol{z}_i - \boldsymbol{z}_j\|_2^2)}, t>0.
\end{equation}
This uniformity metric is expected to be both asymptotically correct (i.e., the distribution optimizing this metric should converge to uniform distribution) and empirically reasonable with a finite number of samples. However, both the uniformity term in InfoNCE and this pairwise Gaussian potential based uniformity metric inherently rely on the large number and high quality of negative samples. This reliance results in substantial computational and memory overhead, as well as class collision, where diverse samples from the same class are inaccurately deemed negative pairs, thereby impeding representation learning for classification \cite{TACL, BGRL}. In this work, we introduce a new negative-free uniformity objective derived from hyperspherical uniform distribution.

\subsubsection{Uniformity from Isotropic Gaussian Distribution}
We first show that zero-mean isotropic (equal-variance) Gaussian distributed vectors (after normalized to norm $1$) are uniformly distributed over the unit hypersphere with the following theorem.

\begin{theorem}[Hyperspherical Uniformity \cite{HU}]\label{sphereuniform}
The normalized vector of Gaussian variables is uniformly distributed on the hypersphere. Formally, let $z_1,z_2,\cdots,z_d\sim \mathcal{N}(0,1)$ and be independent. Then the vector
\begin{equation}
    \boldsymbol{z}=\bigg{[} \frac{z_1}{r},\frac{z_2}{r},\cdots,\frac{z_d}{r} \bigg{]}
\end{equation}
follows the uniform distribution on $\mathcal{S}^{d-1}$, where $r=\sqrt{z_1^2+z_2^2+\cdots+z_d^2}$ is a normalization factor.
\end{theorem}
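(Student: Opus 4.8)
The plan is to exploit the rotational symmetry of the isotropic Gaussian. First I would write the joint density of $(z_1,\dots,z_d)$, namely $\varphi(\boldsymbol{z}) = (2\pi)^{-d/2}\exp\!\big(-\tfrac12\sum_{i=1}^d z_i^2\big) = (2\pi)^{-d/2}\exp\!\big(-\tfrac12\|\boldsymbol{z}\|_2^2\big)$, and observe that it depends on $\boldsymbol{z}$ only through the Euclidean norm $\|\boldsymbol{z}\|_2 = r$. Consequently, for every orthogonal matrix $Q\in O(d)$ the pushforward of this density under $\boldsymbol{z}\mapsto Q\boldsymbol{z}$ is again $\varphi$, since the Jacobian of $Q$ has absolute value $1$ and $\|Q\boldsymbol{z}\|_2 = \|\boldsymbol{z}\|_2$; hence $Q\boldsymbol{z}$ has the same law as $\boldsymbol{z}$.

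Second, I would note that $r=0$ occurs with probability zero, so the normalized vector $\boldsymbol{z}/r$ is almost surely well defined and takes values in $\mathcal{S}^{d-1}$; let $\mu$ denote its distribution. Because $\|Q\boldsymbol{z}\|_2 = \|\boldsymbol{z}\|_2$, the normalization map intertwines the linear action of $O(d)$ on $\mathbb{R}^d\setminus\{\boldsymbol{0}\}$ with the action of $O(d)$ on $\mathcal{S}^{d-1}$, i.e. $(Q\boldsymbol{z})/\|Q\boldsymbol{z}\|_2 = Q(\boldsymbol{z}/r)$. Combining this with the distributional invariance established in the first step gives $Q_\#\mu = \mu$ for every $Q\in O(d)$; that is, $\mu$ is invariant under all rotations and reflections of the sphere.

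Finally, I would invoke uniqueness: since $O(d)$ is compact and acts transitively on $\mathcal{S}^{d-1}$, there is exactly one $O(d)$-invariant Borel probability measure on $\mathcal{S}^{d-1}$, namely the normalized surface measure $\sigma$ (the uniform distribution). Therefore $\mu=\sigma$, which is precisely the assertion that $\boldsymbol{z}$ is uniformly distributed on $\mathcal{S}^{d-1}$.

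The argument is essentially a symmetry observation, so there is no single hard computation; the only point that requires care is the appeal to uniqueness of the rotation-invariant probability measure on the sphere. An alternative, more hands-on route avoids this by passing to spherical coordinates: write $\boldsymbol{z} = r\,\boldsymbol{u}$ with $\boldsymbol{u}\in\mathcal{S}^{d-1}$, compute the Jacobian factor $r^{d-1}$, and check that the joint density factors as $c_d\, r^{d-1}e^{-r^2/2}$ times a constant on $\mathcal{S}^{d-1}$, exhibiting $r$ and $\boldsymbol{u}$ as independent with $\boldsymbol{u}$ uniform; in that version the bookkeeping of the change of variables is the main (routine) obstacle.
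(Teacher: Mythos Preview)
Your proposal is correct and follows essentially the same route as the paper: both write the isotropic Gaussian density as a function of $\|\boldsymbol{z}\|_2$ alone, deduce $O(d)$-invariance of the law of $\boldsymbol{z}$ (the paper packages this as a separate lemma), and then conclude that the normalization $\boldsymbol{z}/r$ is uniform on $\mathcal{S}^{d-1}$ from rotation invariance. If anything, you are slightly more careful than the paper in explicitly invoking the uniqueness of the $O(d)$-invariant probability measure on the sphere, which the paper leaves implicit.
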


\begin{proof}
A random variable has distribution $\mathcal{N}(0,1)$ if it has the density function
\begin{equation}
f(x)=\frac{1}{\sqrt{2\pi}}e^{-\frac{1}{2}x^2}.
\end{equation}
A $d$-dimensional random vector $\boldsymbol{z}$ has distribution $\mathcal{N}(\boldsymbol{0},\boldsymbol{I}_d)$ if the components are independent and have distribution $\mathcal{N}(0,1)$ each. Then the density of $\boldsymbol{z}$ is given by
\begin{equation}
f(\boldsymbol{z})=\frac{1}{(\sqrt{2\pi})^d}e^{-\frac{1}{2}\langle \boldsymbol{z},\boldsymbol{z}\rangle},
\end{equation}
where $\langle \cdot, \cdot \rangle$ denotes the inner product. Then we introduce the following lemma (Lemma~\ref{lemma_sphereuniform}) about the orthogonal-invariance of the Gaussian distribution.

\begin{lemma}\label{lemma_sphereuniform}
Let $\boldsymbol{z}$ be a $d$-dimensional random vector with distribution $\mathcal{N}(\boldsymbol{0},\boldsymbol{I}_d)$ and $\boldsymbol{U}\in\mathbb{R}^{d\times d}$ be an orthogonal matrix ($\boldsymbol{U}\boldsymbol{U}^\top =\boldsymbol{U}^\top\boldsymbol{U}=\boldsymbol{I}_d$). Then $\boldsymbol{Y}=\boldsymbol{U}\boldsymbol{z}$ also has the distribution of $\mathcal{N}(\boldsymbol{0},\boldsymbol{I}_d)$.
\end{lemma}

\begin{proof}
For any measurable set $\mathcal{A}\subset\mathbb{R}^d$, we have that
\begin{equation}
\begin{aligned}
    P(\boldsymbol{Y}\in \mathcal{A})&= P(\boldsymbol{Z}\in \boldsymbol{U}^\top \mathcal{A})\\
    &=\int_{\boldsymbol{U}^\top \mathcal{A}}\frac{1}{(\sqrt{2\pi})^d} e^{-\frac{1}{2}\langle \boldsymbol{z},\boldsymbol{z}\rangle}\\
    &=\int_\mathcal{A}\frac{1}{(\sqrt{2\pi})^d}e^{-\frac{1}{2}\langle \boldsymbol{U}\boldsymbol{z},\boldsymbol{U}\boldsymbol{z}\rangle}\\
    &=\int_\mathcal{A}\frac{1}{(\sqrt{2\pi})^d}e^{-\frac{1}{2}\langle \boldsymbol{z},\boldsymbol{z}\rangle}
\end{aligned}
\end{equation}
because of orthogonality of $\boldsymbol{U}$. Therefore the lemma holds.
\end{proof}

Since any rotation is just a multiplication with some orthogonal matrix, we know that normally distributed random vectors are invariant to rotation. As a result, generating $\boldsymbol{z}\in\mathbb{R}^d$ with distribution $\mathcal{N}(\boldsymbol{0},\boldsymbol{I}_d)$ and then projecting it onto the hypersphere $\mathcal{S}^{d-1}$ produces random vectors $\frac{\boldsymbol{z}}{\|\boldsymbol{z}\|_2}$ that are uniformly distributed on the hypersphere. Therefore the theorem holds.
\end{proof}

The aforementioned theorem establishes an equivalence between the hyperspherical uniform distribution and the normalized isotropic Gaussian distribution. Consequently, we can utilize the distance between the distribution of learned representations and the isotropic Gaussian distribution as a measure of uniformity. Specifically, given learned representations $\boldsymbol{Z} \in \mathbb{R}^{n \times d}$, we first apply batch normalization to ensure that each representation channel follows a distribution with zero mean and unit variance. To simplify computation, we adopt a Gaussian hypothesis for the normalized representations and assume they conform to $\mathcal{N}(\boldsymbol{0}, \boldsymbol{\Sigma})$, where $\boldsymbol{\Sigma} = \frac{1}{n-1}\boldsymbol{Z}^\top \boldsymbol{Z}$, and the on-diagonal elements are $1$. Building upon this assumption, we employ the 2-Wasserstein distance, as defined in Eq. (\ref{2-Wasserstein distance}), to quantify the disparity between the learned representation distribution $\mathcal{N}(\boldsymbol{0}, \boldsymbol{\Sigma})$ and the target isotropic Gaussian distribution $\mathcal{N}(\boldsymbol{0}, \boldsymbol{I}_d)$ as the uniformity objective:
\begin{equation}
\begin{aligned}
     \mathcal{U}(\boldsymbol{Z})
     &=\mathcal{W}_2^2 \left(\mathcal{N}(\boldsymbol{0}, \boldsymbol{\Sigma}), \mathcal{N}(\boldsymbol{0}, \boldsymbol{I}_d) \right) \\
     &=\|\boldsymbol{0}-\boldsymbol{0}\|_2^2+\operatorname{Tr}\left(\boldsymbol{\Sigma}+\boldsymbol{I}_d - 2\left( \boldsymbol{I}_d ^\frac{1}{2} \boldsymbol{\Sigma} \boldsymbol{I}_d^\frac{1}{2} \right)^\frac{1}{2} \right) \\
     &= 2d - 2\operatorname{Tr} \left(\boldsymbol{\Sigma}^\frac{1}{2} \right). \label{Eq: Wasserstein Uniformity}
\end{aligned}
\end{equation}
Minimizing this uniformity objective promotes the proximity of the learned representation distribution to the isotropic Gaussian distribution, thereby enhancing the uniformity of the learned representations. Consequently, for the representations $\boldsymbol{Z}^1$ and $\boldsymbol{Z}^2$ associated with the two generated graph views during training, the expression for the \emph{uniformity} loss is:
\begin{equation}
\begin{aligned}
     \mathcal{L}_{uni}&=\frac{1}{2}(\mathcal{U}(\boldsymbol{Z}^1)+\mathcal{U}(\boldsymbol{Z}^2)).\\
     &=2d-\operatorname{Tr}\left(\boldsymbol{\Sigma}_1^\frac{1}{2} \right)-\operatorname{Tr}\left(\boldsymbol{\Sigma}_2^\frac{1}{2} \right),     \label{Eq: Uniformity}
\end{aligned}     
\end{equation}
where $\boldsymbol{\Sigma}_1 = \frac{1}{n-1}{\boldsymbol{Z}^1}^\top \boldsymbol{Z}^1$ and $\boldsymbol{\Sigma}_2 = \frac{1}{n-1}{\boldsymbol{Z}^2}^\top \boldsymbol{Z}^2$ are estimated covariance matrices of the two view representations.

\subsubsection{Combining with the View Consistency Prior}
Similar to the alignment term in contrastive learning objectives, we try to maximize the correlation between two views by minimizing the Euclidean distance between representations derived from one sample:
\begin{equation}
    \mathcal{L}_{ali} = \frac{1}{n}\| \boldsymbol{Z}^1 - \boldsymbol{Z}^2 \|_2^2.\label{Eq: Euclidean distance}
\end{equation}
Since $\boldsymbol{Z}^1$ and $\boldsymbol{Z}^2$ are bath-normalized, $\| \boldsymbol{Z}^1 - \boldsymbol{Z}^2 \|_2^2 $ can be computed by:
\begin{equation}
\begin{aligned}
    \| \boldsymbol{Z}^1 - \boldsymbol{Z}^2 \|_2^2
    &= \sum_j^d \sum_i^n \left(z^1_{i,j}-z^2_{i,j}\right)^2\\
    &= \sum_j^d \sum_i^n \left( \left((z^1_{i,j}\right)^2 + \left((z^2_{i,j}\right)^2 - 2z^1_{i,j}z^2_{i,j} \right)\\
    &= \sum_j^d \left(2 - 2 \left(\boldsymbol{Z}^1_{\cdot, j}\right)^\top \boldsymbol{Z}^2_{\cdot, j}\right)\\
    &= \left(2d-2\operatorname{Tr} \left( {\boldsymbol{Z}^1}^\top \boldsymbol{Z}^2 \right) \right).
\end{aligned}
\end{equation}
For simplicity, we adopt the following equivalent form of Eq. (\ref{Eq: Euclidean distance}) as the implementation of our \emph{alignment} loss:
\begin{equation}
    \mathcal{L}_{ali} = \frac{1}{n} \operatorname{Tr}\left({\boldsymbol{Z}^1}^\top \boldsymbol{Z}^2 \right). \label{Eq: Alignment}
\end{equation}
The alignment loss promotes the mapping of different augmentation views of the same sample to nearby representations, thus enabling the model to learn representations invariant to unneeded noise factors.

\subsubsection{Overall Objective}
Combining the \emph{uniformity} loss and the \emph{alignment} loss, we formulate our overall objective as follows:
\begin{equation}
    \mathcal{L} = \mathcal{L}_{ali} + \lambda \mathcal{L}_{uni}, \label{Eq: Overall Objective}
\end{equation}
where $\lambda$ is a non-negative hyperparameter that balances the contributions of the two terms.

\noindent\textbf{Complexity Analysis.}\quad
Consider a graph with $n$ nodes, and each node is embedded into a $d$-dimensional vector. The computation of the alignment term $\mathcal{L}_{ali}$ requires $\mathcal{O}(n)$ time and $\mathcal{O}(n)$ space. The computation of $\boldsymbol{\Sigma}^{\frac{1}{2}}$ in the uniformity term $\mathcal{L}_{uni}$ is implemented using eigenvalue decomposition, which takes $\mathcal{O}(d^3)$ time and $\mathcal{O}(d^2)$ space, with typically $d \ll n$. In contrast, contrastive learning methods \cite{GRACE, GCA, COSTA, HomoGCL} treat two views of the same node as positive pairs and views of different nodes as negative pairs, incurring $\mathcal{O}(n^2)$ time and $\mathcal{O}(n^2)$ space. As a result, our negative-free method holds more promise for handling large-scale graphs without incurring prohibitively high time and space costs compared to contrastive learning methods.

\begin{algorithm}
    \caption{The overall procedure of SSGE}
    \label{Alg: algorithm}
    \textbf{Input}: $\mathcal{G} = (\boldsymbol{A}, \boldsymbol{X})$\\
    \textbf{Parameter}: Trade-off $\lambda$, Augmentor $\mathcal{T}$\\
    \textbf{Output}: The graph encoder $f_\theta$
    \begin{algorithmic}[1] 
        \State Initialize model parameters;
        \While{not converge}
        \State Sample two augmentation functions $t^1, t^2 \sim \mathcal{T}$;
        \State Generate two augmented graphs via $t^1(\boldsymbol{A},\boldsymbol{X})$ and $t^2(\boldsymbol{A},\boldsymbol{X})$;
        \State Obtain batch-normalized node representations $\boldsymbol{Z}^1$ and $\boldsymbol{Z}^2$ using $f_\theta$;
        \State Compute the uniformity loss via Eq. \ref{Eq: Uniformity};
        \State Compute the alignment loss via Eq. \ref{Eq: Alignment};
        \State Update the parameters of $f_\theta$ by minimizing Eq. \ref{Eq: Overall Objective};
        \EndWhile
        \State \textbf{return} $f_\theta$.
    \end{algorithmic}
\end{algorithm}

\noindent\textbf{Advantages over Peer Works}\quad
Here we conduct a systematic comparison with previous graph self-supervised learning methods, including DGI \cite{DGI}, MVGRL \cite{MVGRL}, GRACE \cite{GRACE}, GCA \cite{GCA}, BGRL \cite{BGRL}, AFGRL \cite{AFGRL}, CCA-SSG \cite{CCASSG}, and G-BT \cite{G-BT}. In brief, our SSGE stands out by mitigating reliance on negative samples and intricate components. Specifically, DGI and MVGRL require a parameterized Jensen-Shannon estimator for approximating mutual information between two views. MVGRL also introduces asymmetric architectures by employing two different GNNs for the input graph and the diffusion graph, respectively. On the other hand, GRACE and GCA employ an additional MLP-projector followed by an InfoNCE mutual information estimator, which relies on a substantial number of negative samples, leading to heavy computation, memory overhead, and class collision. BGRL and AFGRL require an asymmetric encoder architecture that incorporates Exponential Moving Average (EMA), Stop-Gradient, and an additional projector. CCA-SSG and G-BT may demonstrate suboptimal performance on datasets with smaller feature dimensions, as they essentially perform dimension reduction. In contrast, our framework requires no additional components except a single GNN encoder, our Gaussian distribution-guided objective is negative-free, and our model can also work well on low-dimensional datasets.

\subsubsection{In-depth Analysis}

We recognize that the optimal solution for the final loss functions of CCA-SSG, G-BT, and our SSGE is $\boldsymbol{\Sigma} = \boldsymbol{I}_d$. Despite such similarity, we demonstrate the superiority of our objective function theoretically as follows.

We begin by briefly discussing the phenomenon of dimensional collapse in representation learning. Dimensional collapse occurs when the embedding occupies a subspace rather than the entire embedding space, which is indicated by one or more zero singular values of the embedding matrix  (in other words, one or more zero eigenvalues $\lambda_i$ of the covariance matrix $\boldsymbol{\Sigma}$) \cite{DirectCLR}. To address dimensional collapse, G-BT, CCA-SSG, and our proposed SSGE decorrelate the covariance matrix through a regularization term. Specifically, G-BT and CCA-SSG minimize $||\boldsymbol{\Sigma}-\boldsymbol{I}_d||_F^2$ to encourage the covariance matrix to approximate the identity matrix. In contrast, our SSGE minimizes the distance between the distribution of learned representations and the isotropic Gaussian distribution, which can be simplified as $-\operatorname{Tr}(\boldsymbol{\boldsymbol{\Sigma}}^{\frac{1}{2}})$ according to Eq. (\ref{Eq: Wasserstein Uniformity}).

Note that $||\boldsymbol{\Sigma}-\boldsymbol{I}_d||_F^2 = \sum_{i=1}^d (\lambda_i-1)^2$, $-\operatorname{Tr}(\boldsymbol{\Sigma}^{1/2})= -\sum_{i=1}^d \sqrt{\lambda_i}$, and $\sum_{i=1}^d \lambda_i = ||\boldsymbol{Z}|_F^2 = d$ since $\boldsymbol{Z}$ is batch-normalized. We can reformulate these regularization terms using eigenvalues $\lambda_i$: for G-BT and CCA-SSG, the problem becomes $\min \sum_{i=1}^d (\lambda_i - 1)^2$ subject to $\sum_{i=1}^d \lambda_i = d$, while for SSGE, it is $\min -\sum_{i=1}^d \sqrt{\lambda_i}$ under the same constraint. The optimal solution for both optimization problems is $\lambda_i = 1$ for each $i$.

However, when optimizing model parameters using SGD, G-BT and CCA-SSG may inadequately regularize zero eigenvalues compared to our SSGE. Specifically, the gradient of $||\boldsymbol{\Sigma}-\boldsymbol{I}_d||_F^2 = \sum_{i=1}^d (\lambda_i-1)^2$ with respect to the eigenvalue $\lambda_i$ is $\nabla_{\lambda_i} = 2(\lambda_i-1)$. This regularization term imposes a finite penalty on singular values $\lambda_i$ approaching 0, potentially leading to dimensional collapse if other loss terms offer greater rewards than the penalty.

In contrast, the gradient of $-\operatorname{Tr}(\boldsymbol{\Sigma}^{1/2})= -\sum_{i=1}^d \sqrt{\lambda_i}$ with respect to $\lambda_i$ is $\nabla_{\lambda_i} = -\frac{1}{2\sqrt{\lambda_i}}$, resulting in a significantly larger gradient as $\lambda_i$ approaches 0. Thus, the optimization process will naturally steer away from nearly infinite penalties toward full rank $\boldsymbol{\Sigma}$, avoiding dimensional collapse more effectively.

\section{Experiments}
In this section, we design the experiments to evaluate our proposed SSGE and answer the following research questions. \textbf{RQ1:} Does SSGE outperform existing baseline methods on node classification and node clustering? \textbf{RQ2:} Is SSGE more efficient than graph contrastive learning baselines? \textbf{RQ3:} How does each component of SSGE benefit the performance? \textbf{RQ4:} How to intuitively understand the role of our proposed uniformity objective? \textbf{RQ5:} Are learned node representations uniformly distributed on the hypersphere? \textbf{RQ6:} Is SSGE sensitive to hyperparameters?

\subsection{Experiment Setup}
\subsubsection{Datasets}
We adopt seven publicly available real-world benchmark datasets, including three citation networks Cora, CiteSeer, and PubMed,  one reference network WikiCS, one co-purchase network Computer, one co-authorship network CoauthorCS, and one large-scale citation network ArXiv to conduct the experiments throughout the paper. The statistics of the datasets are provided in Table \ref{Tab: Dataset statistics}. We give their detailed descriptions as follows:

\begin{table}[h]
    \begin{center}
    \setlength{\tabcolsep}{3.5pt}
    {\caption{Dataset statistics.}\label{Tab: Dataset statistics}}
    \vspace{-0.2cm}
    \begin{tabular}{lccccc}
    \toprule
    Dataset    & Type           & \#Nodes    & \#Edges   & \#Feats  & \#Cls \\
    \midrule 
    Cora       & citation       & 2,708      & 10,556    & 1,433       & 7          \\
    CiteSeer   & citation       & 3,327      & 9,228     & 3,703       & 6          \\
    PubMed     & citation       & 19,717     & 88,651    & 500         & 3          \\
    WikiCS     & reference      & 11,701     & 431,726   & 300         & 10         \\
    Computer   & co-purchase    & 13,752     & 491,722   & 767         & 10         \\
    CoauthorCS & co-authorship  & 18,333     & 163,788   & 6,805       & 15         \\
    ArXiv      & citation       & 169,343    & 2,315,598 & 128         & 40         \\
    \bottomrule
    \end{tabular}
    \end{center}
\end{table}

\begin{itemize}
    \item \textbf{Cora}, \textbf{CiteSeer}, \textbf{PubMed} \cite{GCN} are three well-known citation network datasets, in which nodes represent publications and edges indicate their citations. Each node in Cora and CiteSeer is described by a 0/1-valued word vector indicating the absence/presence of the corresponding word from the dictionary, while each node in PubMed is described by a TF/IDF weighted word vector from the dictionary. All nodes are labeled based on the respective paper subjects.

    \item \textbf{WikiCS} \cite{WikiCS} is a reference network constructed from Wikipedia. It comprises nodes corresponding to articles in the field of Computer Science, where edges are derived from hyperlinks. The dataset includes 10 distinct classes representing various branches within the field. The node features are computed as the average GloVe word embeddings of the respective articles.

    \item \textbf{Computer} \cite{Amazon} is a network constructed from Amazon's co-purchase relationships. Nodes represent goods, and edges indicate frequent co-purchases between goods. The node features are represented by bag-of-words encoding of product reviews, and class labels are assigned based on the respective product categories.

    \item \textbf{CoauthorCS} \cite{Amazon} is a co-authorship network based on the Microsoft Academic Graph. Here, nodes are authors, that are connected by an edge if they co-authored a paper; node features represent paper keywords for each author’s papers, and class labels indicate the most active fields of study for each author.

    \item \textbf{ArXiv} \cite{Amazon} is a citation network between most Computer Science arXiv papers indexed by Microsoft Academic Graph, where nodes represent papers and edges represent citation relations. Each node is described by a $128$-dimensional feature vector obtained by averaging the skip-gram word embeddings in its title and abstract. The nodes are categorized by their related research area.

\end{itemize}

\subsubsection{Baselines}
To verify the effectiveness of our proposed model, our evaluation includes a comprehensive comparison of SSGE with $22$ baseline methods. We divide all baseline models into the following four categories:
\begin{itemize}
    \item Supervised methods: MLP and graph neural networks GCN \cite{GCN} and GAT \cite{GAT};

    \item Traditional unsupervised models: random-walk based graph embedding methods DeepWalk \cite{DeepWalk} and Node2Vec \cite{Node2Vec}, graph auto-encoders GAE and VGAE \cite{VGAE};

    \item Contrastive learning models: GMI \cite{GMI}, DGI \cite{DGI}, GGD \cite{GGD}, MVGRL \cite{MVGRL}, GRACE \cite{GRACE}, GCA \cite{GCA}, MERIT \cite{MERIT}, gCooL \cite{gCooL}, COSTA \cite{COSTA}, and HomoGCL \cite{HomoGCL};

    \item Non-contrastive self-supervised learning models: BGRL \cite{BGRL}, AFGRL \cite{AFGRL}, RGRL \cite{RGRL}, CCA-SSG \cite{CCASSG}, and G-BT \cite{G-BT}.
\end{itemize}

\begin{table}[h]
\begin{center}
\setlength{\tabcolsep}{4pt}
{\caption{Hyperparameter Specifications.}\label{Tab: Hyperparameter Specifications}}
\vspace{-0.2cm}
\begin{tabular}{lccccc}
\toprule
Dataset   & $p_{d}, p_{m}$  & lr, wd        & $\lambda$  & \#hid\_units  & \#epochs \\
\midrule
Cora      & 0.3, 0.1        & 1e-3, 1e-5    & 0.1        & 256-256       & 80       \\
CiteSeer  & 0.4, 0.0        & 1e-3, 1e-5    & 0.05       & 512           & 20       \\
PubMed    & 0.3, 0.5        & 1e-3, 1e-5    & 0.6        & 512-256       & 100      \\
WikiCS    & 0.8, 0.1        & 1e-2, 1e-6    & 0.5        & 256-256       & 50       \\
Computer  & 0.1, 0.3        & 1e-3, 1e-5    & 1.0        & 512-512       & 120      \\
CoauthorCS        & 1.0, 0.2        & 1e-3, 1e-5    & 0.05       & 512-512       & 80       \\
ArXiv     & 0.5, 0.3        & 1e-2, 1e-6    & 3.0        & 512-512       & 400      \\
\bottomrule
\end{tabular}
\end{center}
\end{table}

\begin{table*}[!ht]
    \begin{center}
    \setlength{\tabcolsep}{4pt}
    {\caption{Node classification results measured by accuracy along with standard deviations. The \emph{Input} column illustrates the data used in the training stage, and $\boldsymbol{Y}$ denotes labels. `-' means the method is out-of-memory under a full-graph training setting.}\label{Tab: Node Classification}}
    \vspace{0.2cm}
    \begin{tabular}{c|llccccccc}
        \toprule
        & Method    & Input      & Cora & CiteSeer & PubMed  & WikiCS & Computer & CoauthorCS & ArXiv  \\
        \midrule
    \multirow{3}{*}{\rotatebox{00}{Supervised}}
        & MLP       & $\boldsymbol{X}, \boldsymbol{Y}$                    & 57.8±0.2 & 54.2±0.1 & 72.8±0.2 & 71.98±0.00 & 73.81±0.00 & 90.37±0.00 & 55.50±0.23 \\
        & GCN       & $\boldsymbol{A}, \boldsymbol{X}, \boldsymbol{Y}$    & 81.5±0.4 & 70.2±0.4 & 79.0±0.2 & 77.19±0.12 & 86.51±0.54 & 93.03±0.31 & 71.74±0.29 \\
        & GAT       & $\boldsymbol{A}, \boldsymbol{X}, \boldsymbol{Y}$    & 83.0±0.7 & 72.5±0.7 & 79.0±0.3 & 77.65±0.11 & 86.93±0.29 & 92.31±0.24 & 72.10±0.13 \\
        \midrule
    \multirow{4}{*}{\rotatebox{0}{Traditional}}
        & DeepWalk  & $\boldsymbol{A}$                                    & 68.5±0.5 & 49.8±0.2 & 66.2±0.7 & 74.35±0.06 & 85.68±0.06 & 84.61±0.22 & ~ \\
        & Node2Vec  & $\boldsymbol{A}$                                    & 70.1±0.4 & 49.8±0.3 & 69.8±0.7 & 71.79±0.05 & 84.39±0.08 & 85.08±0.03 & 70.07±0.13 \\
        & GAE       & $\boldsymbol{A}, \boldsymbol{X}$                    & 71.5±0.4 & 65.8±0.4 & 72.1±0.5 & 77.87±0.53 & 85.27±0.19 & 90.01±0.71 & - \\
        & VGAE      & $\boldsymbol{A}, \boldsymbol{X}$                    & 73.0±0.3 & 68.3±0.4 & 75.8±0.2 & 77.87±0.53 & 86.37±0.21 & 92.11±0.09 & - \\
        \midrule
    \multirow{10}{*}{\rotatebox{0}{Contrastive}}
        & GMI       & $\boldsymbol{A}, \boldsymbol{X}$                    & 83.0±0.3 & 72.4±0.1 & 79.9±0.2 & 74.85±0.08 & 82.21±0.31 & 88.78±0.12 & - \\
        & DGI       & $\boldsymbol{A}, \boldsymbol{X}$                    & 82.3±0.6 & 71.8±0.7 & 76.8±0.6 & 78.25±0.56 & 83.95±0.47 & 92.15±0.63 & 70.34±0.16 \\
        & GGD       & $\boldsymbol{A}, \boldsymbol{X}$                    & 83.5±0.6 & 73.0±0.6 & 81.0±0.8 & 78.62±0.47 & 88.12±0.62 & 92.30±0.23 & 71.20±0.20 \\
        & MVGRL     & $\boldsymbol{A}, \boldsymbol{X}$                    & 83.5±0.4 & 73.3±0.5 & 80.1±0.7 & 77.57±0.46 & 87.52±0.11 & 92.11±0.12 & - \\
        & GRACE     & $\boldsymbol{A}, \boldsymbol{X}$                    & 81.9±0.4 & 71.3±0.3 & 80.1±0.2 & 78.64±0.33 & 88.29±0.11 & 92.17±0.04 & - \\
        & GCA       & $\boldsymbol{A}, \boldsymbol{X}$                    & 81.7±0.3 & 71.1±0.4 & 79.5±0.5 & 78.35±0.05 & 87.85±0.31 & 93.10±0.01 & - \\
        & MERIT     & $\boldsymbol{A}, \boldsymbol{X}$                    & 83.1±0.6 & 73.5±0.7 & 80.1±0.4 & 78.35±0.05 & 88.01±0.12 & 92.51±0.14 & - \\
        & gCooL     & $\boldsymbol{A}, \boldsymbol{X}$                    & 82.8±0.5 & 72.0±0.3 & 80.2±0.4 & 78.74±0.04 & 88.67±0.10 & 92.75±0.01 & - \\
        & COSTA     & $\boldsymbol{A}, \boldsymbol{X}$                    & 82.2±0.2 & 70.7±0.5 & 80.4±0.3 & 78.82±0.12 & 88.32±0.03 & 92.94±0.10 & - \\
        & HomoGCL   & $\boldsymbol{A}, \boldsymbol{X}$                    & \textbf{84.1±0.5} & 72.3±0.7 & 81.1±0.3 & 78.26±0.32 & 88.46±0.20 & 92.28±0.03 & - \\
        \midrule
    \multirow{6}{*}{\rotatebox{0}{Non-contrastive}}
        & BGRL      & $\boldsymbol{A}, \boldsymbol{X}$                    & 82.7±0.6 & 71.1±0.8 & 79.6±0.5 & 78.41±0.09 & 87.89±0.10 & 92.72±0.03 & 71.44±0.12 \\
        & AFGRL     & $\boldsymbol{A}, \boldsymbol{X}$                    & 79.8±0.2 & 69.4±0.2 & 80.0±0.1 & 77.62±0.49 & 88.12±0.27 & 93.07±0.17 &- \\
        & RGRL      & $\boldsymbol{A}, \boldsymbol{X}$                    & 83.5±0.7 & 71.5±0.9 & 79.8±0.3 & 78.78±0.64 & 88.45±0.52 & 92.94±0.13 & 71.49±0.08 \\
        & CCA-SSG   & $\boldsymbol{A}, \boldsymbol{X}$                    & 83.9±0.4 & 73.0±0.3 & 80.7±0.4 & 77.92±0.65 & 88.76±0.36 & 93.01±0.20 & 71.21±0.20 \\
        & G-BT      & $\boldsymbol{A}, \boldsymbol{X}$                    & 83.6±0.0 & 72.9±0.4 & 80.4±0.1 & 76.83±0.73 & 87.93±0.36 & 92.91±0.25 & 71.12±0.18 \\
        & SSGE      & $\boldsymbol{A}, \boldsymbol{X}$                    & 83.9±0.3 & \textbf{74.1±0.3} & \textbf{81.6±0.1} & \textbf{79.18±0.57} & \textbf{89.05±0.58} & \textbf{93.46±0.45} & \textbf{71.62±0.19} \\
        \bottomrule
    \end{tabular}
    \end{center}
\end{table*}

\subsubsection{Evaluation Protocol}
We evaluate the performance of SSGE on two downstream tasks, i.e., node classification and node clustering. Following previous work \cite{HomoGCL}, We first train the model in an unsupervised manner. Then we freeze the parameters of the encoder to generate node representations for downstream tasks. For node classification, we use the learned representations to train and test a simple logistic regression classifier on public splits for Cora, CiteSeer, PubMed, WikiCS, and ArXiv, and ten 1:1:8 train/validation/test random splits for Computer and CoauthorCS, as they have no publicly accessible splits. We train the model for ten runs and report the performance in terms of accuracy. For node clustering, we train a $k$-means model on the learned representations ten times, where the number of clusters is set to the number of classes for each dataset. We measure the clustering performance in terms of two prevalent metrics Normalized Mutual Information (NMI) score: $\operatorname{NMI}=2I(\hat{\boldsymbol{Y}};\boldsymbol{Y})/[H(\hat{\boldsymbol{Y}})+H(\boldsymbol{Y})]$, where $\hat{\boldsymbol{Y}}$ and $\boldsymbol{Y}$ being the predicted cluster indexes and class labels respectively, $I(\cdot)$ being the mutual information, and $H(\cdot)$ being the entropy; and Adjusted Rand Index (ARI): $\operatorname{ARI}=\text{RI}-\mathbb{E}[\text{RI}]/(\max \{ \text{RI} \} - \mathbb{E}[\text{RI}])$, where RI being the Rand Index.

\subsubsection{Implementation Details}
We implement our model with PyTorch. All experiments are conducted on a V100 GPU with 32 GB of memory. The graph encoder $f_\theta$ is specified as a standard two-layer GCN model with the ELU activation \cite{ELU} for all the datasets except CiteSeer (where we empirically find that a one-layer GCN is better). During training, we employ the Adam SGD optimizer \cite{Adam} with a learning rate and weight decay of either (1e-3, 1e-5) or (1e-2, 1e-6). The augmentation function set $\mathcal{T}$ is controlled by the edge dropping ratio $p_d$ and the feature masking ratio $p_m$. We utilize the processed versions of all datasets provided by the Deep Graph Library \cite{DGL}. Detailed hyperparameters can be found in Table \ref{Tab: Hyperparameter Specifications}. The implementation code is available at \url{https://github.com/Cloudy1225/SSGE}.

\begin{table*}[!ht]
  \begin{center}
  {\caption{Node clustering results measured by NMI and ARI along with standard deviations.}
  \label{Tab: Node Clustering}}
  \vspace{0.2cm}
  \begin{tabular}{l|cc|cc|cc}
  \toprule
  Dataset    & \multicolumn{2}{c|}{Cora}  & \multicolumn{2}{c|}{CiteSeer}  & \multicolumn{2}{c}{PubMed} \\ \midrule
    Metric   & NMI        & ARI        & NMI        & ARI        & NMI        & ARI                \\ \midrule
    $k$-means & 15.44±3.83 & 9.49±2.01 & 20.66±2.83 & 16.80±3.02 & 31.34±0.15 & 28.12±0.03 \\ 
    SC & 46.07±0.99 & 34.39±0.98 & 23.95±0.53 & 18.48±0.42 & 28.75±0.00 & 30.34±0.00 \\ 
    VGAE & 52.48±1.33 & 43.99±2.34 & 34.46±0.92 & 32.65±0.92 & 27.16±1.45 & 26.32±1.15 \\  \midrule
    DGI & 55.82±0.60 & 48.91±1.42 & 41.16±0.54 & 39.78±0.74 & 25.27±0.02 & 24.06±0.03 \\ 
    MVGRL & 56.30±0.27 & 50.28±0.40 & 43.47±0.08 & 44.09±0.09 & 27.07±0.00 & 24.53±0.00 \\ 
    GRACE & 55.82±0.60 & 48.91±1.42 & 39.07±0.07 & 40.38±0.08 & 30.44±0.02 & 30.62±0.01 \\ 
    GCA & 52.82±1.11 & 46.29±1.80 & 41.08±0.28 & 41.72±0.30 & 31.62±0.08 & 30.76±0.17 \\ 
    gCooL & 50.25±1.08 & 44.95±1.74 & 41.67±0.41 & 42.66±0.47 & 33.14±0.02 & 31.93±0.01 \\ 
    HomoGCL & 57.87±1.47 & 53.68±1.77 & 40.32±0.07 & 40.10±0.09 & 27.67±1.78 & 25.59±0.84 \\ \midrule
    BGRL & 55.38±0.49 & 47.10±0.35 & 38.95±0.33 & 38.81±0.12 & 28.43±0.10 & 24.81±0.08 \\ 
    CCA-SSG & 56.51±1.49 & 50.77±3.39 & 43.69±0.24 & 44.26±0.23 & 29.61±0.01 & 25.81±0.01 \\ 
    G-BT & 55.54±1.19 & 48.39±2.44 & 42.78±0.25 & 43.89±0.21 & 30.12±0.03 & 29.32±0.02 \\ 
    SSGE & \textbf{60.58±0.25} & \textbf{56.96±0.34} & \textbf{45.27±0.33} & \textbf{46.87±0.37} & \textbf{33.42±0.12} & \textbf{32.05±0.11} \\ 
  \bottomrule
  \end{tabular}
  \end{center}
\end{table*}

\begin{table*}[!ht]
\begin{center}
\setlength{\tabcolsep}{3.8pt}
{\caption{Comparison of the number of parameters, training time for achieving the best performance, and the memory cost of different methods on Cora, CiteSeer, PubMed, and Computer.}
\label{Tab: Efficiency Comparison}}
\begin{tabular}{l|ccc|ccc|ccc|ccc}
    \toprule
 Dataset      & \multicolumn{3}{c|}{Cora} & \multicolumn{3}{c}{CiteSeer} & \multicolumn{3}{c|}{PubMed} & \multicolumn{3}{c}{Computer} \\ \midrule
              & \#Paras & Time & Mem  & \#Paras & Time & Mem  & \#Paras & Time & Mem  & \#Paras & Time & Mem \\ \midrule	
 DGI          & 1,260K & 4.06s & 570MB & 2,422K & 4.51s & 710MB & 782K & 10.24s & 1,024MB & 919K & 13.50s & 926MB \\
 MVGRL        & 1,731K & 17.73s & 3,838MB & 4,055K & 15.48s & 7,386MB & 775K & 83.64s & 4,622MB & 1,049K & 124.96s & 4,660MB \\
 GRACE        & 564K & 10.57s & 748MB & 2,159K & 7.59s & 1,018MB & 326K & 396.97s & 12,744MB & 394K & 267.72s & 6,630MB \\
 HomoGCL      & 866K & 3.95s & 1,014MB & 2,028K & 6.01s & 1,478MB & 388K & 167.97s & 22,010MB & 525K & 92.35s & 11,788MB \\
 SSGE         & 433K & 3.41s & 580MB & 1,896K & 2.44s & 800MB & 388K & 7.90s & 1,256MB & 656K & 15.42s & 1,234MB \\
    \bottomrule
\end{tabular}
\end{center}
\end{table*}

\subsection{Main Results}
\subsubsection{Performance Comparison (\textbf{RQ1})}
The experimental results of node classification on seven datasets are shown in Table \ref{Tab: Node Classification}. As we can see, SSGE outperforms all self-supervised baselines on six out of seven datasets, despite its simple architecture. On Cora, SSGE achieves competitive results as the most powerful baseline HomoGCL. It is worth mentioning that we empirically find that on CoauthorCS, a pure two-layer MLP encoder is better than GNN models. This might be because the graph-structured information is much less informative than the node features, presumably providing harmful signals for classification. We also evaluate the node clustering performance on the three citation networks Cora, CiteSeer, and PubMed. As shown in Table \ref{Tab: Node Clustering}, SSGE can always yield significant improvements over other methods, especially on Cora with a 2.7\% gain for NMI and a 3.3\% gain for ARI. These results clearly demonstrate the effectiveness of our method.

\subsubsection{Efficiency Comparison (\textbf{RQ2})}
We conduct a comparative analysis with several graph contrastive learning methods regarding the number of model parameters, training time, and memory costs on datasets including Cora, CiteSeer, PubMed, and Computer. Table \ref{Tab: Efficiency Comparison} summarizes all indicators of various methods. Overall, SSGE has fewer parameters, shorter training time, and fewer memory costs in most cases. This is because our method does not rely on additional projection heads, parameterized mutual information estimator, and negative samples, which contribute to increased computational load, additional parameters, and storage requirements. Besides, the short training time potentially indicates the fast convergence of our algorithm. Despite its simplicity and efficiency, our method achieves even better (or competitive) performance.

\subsection{In-Depth Analysis}
\subsubsection{Effectiveness of Alignment/Uniformity Terms (\textbf{RQ3})}
To verify the effects of each loss component, we conduct ablation studies with varying combinations of the alignment term and the uniformity term. We assess their impact on node classification across four datasets, and the results are presented in Table \ref{Tab: Ablation Study}. It is observed that only using the alignment term will lead to a performance drop instead of completely collapsed solutions since node representations are batch-normalized to have zero-mean and one-standard deviation. On the other hand, optimizing only the uniformity term yields unsatisfactory results, as the model learns nothing meaningful but Gaussian distributed representations. These results highlight the effectiveness of incorporating the alignment term and the uniformity term to learn discriminative node representations.

\begin{table}[h]
\centering
    \renewcommand\arraystretch{1.2}
    \begin{tabular}{l|cccc}
    \toprule
    Variants & Cora & CiteSeer & PubMed & WikiCS\\
    \midrule
    $\mathcal{L}_{ali}$      & 79.7±0.1 & 71.9±0.2 & 78.1±0.2 & 77.02±0.40  \\
    $\mathcal{L}_{uni}$      & 48.7±0.6 & 30.1±0.7 & 51.4±0.5 & 76.47±0.60  \\
    $\mathcal{L}_{ali}+\mathcal{L}_{uni}$     & 83.9±0.3 & 74.1±0.3 & 81.6±0.1 & 79.18±0.57   \\
    \bottomrule
    \end{tabular}
\caption{Ablation study of node classification accuracy (\%) on the key components of SSGE.}
\label{Tab: Ablation Study}
\end{table}

\subsubsection{Visualization of Feature Covariance Matrix (\textbf{RQ4})}
To gain a visual insight into the role of the uniformity term, we present visualizations depicting the normalized covariance matrix $\boldsymbol{\Sigma}$ of learned representations under various settings on Cora. As depicted in Figure \ref{Fig: COV Visualization1}, the off-diagonal elements of the covariance matrix approach $1$ when the uniformity term is not considered. This suggests that various dimensions of the representation matrix are coupled together, signifying the occurrence of dimensional collapse (all the embeddings lie in a line). From Figure \ref{Fig: COV Visualization2} and Figure \ref{Fig: COV Visualization3}, we observe that the proposed uniformity objective effectively decorrelates diverse representation dimensions, thereby mitigating the issue of dimensional collapse.

\begin{figure}[h]
    \centering
    \subfigure[$\mathcal{L}_{ali}$]{\includegraphics[width=0.32\linewidth]{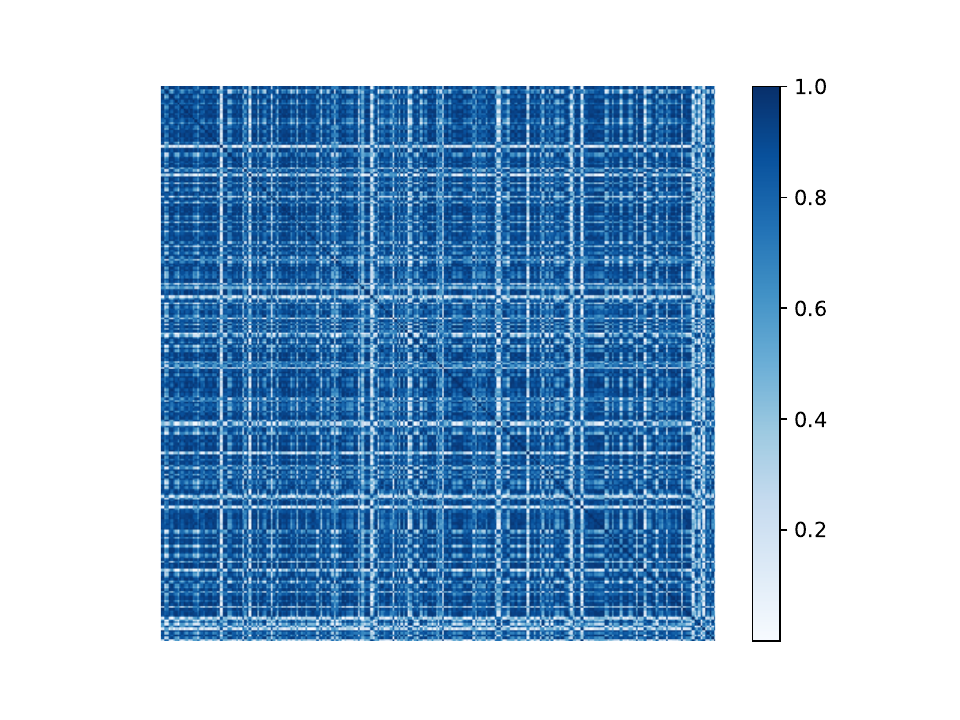}\label{Fig: COV Visualization1}}   
    \subfigure[$\mathcal{L}_{uni}$]{\includegraphics[width=0.32\linewidth]{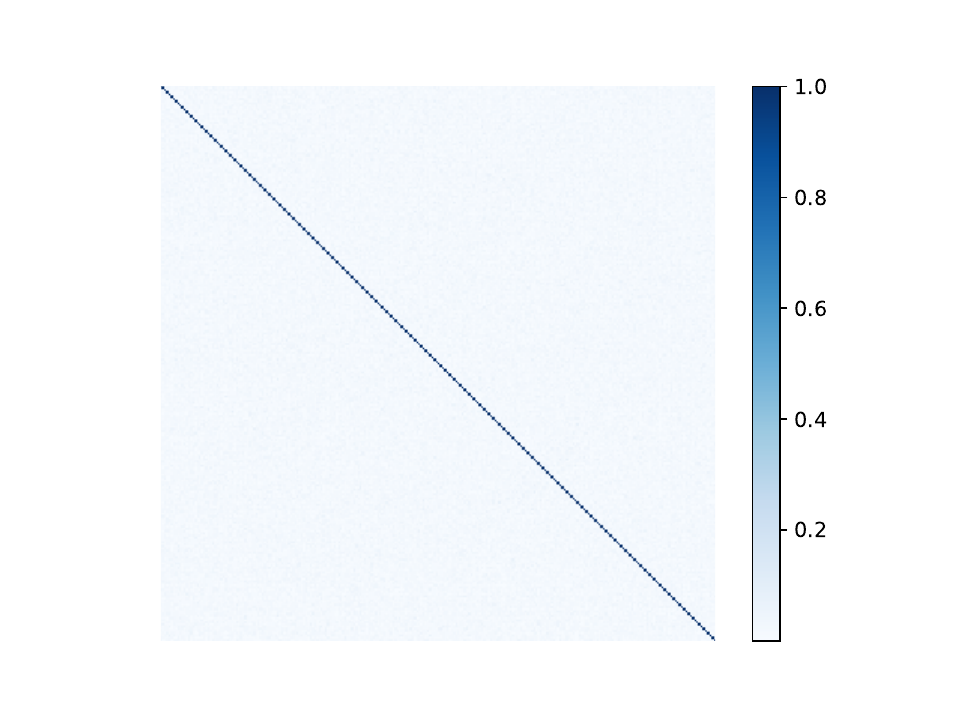}\label{Fig: COV Visualization2}}  
    \subfigure[$\mathcal{L}_{ali}+\mathcal{L}_{uni}$]{\includegraphics[width=0.32\linewidth]{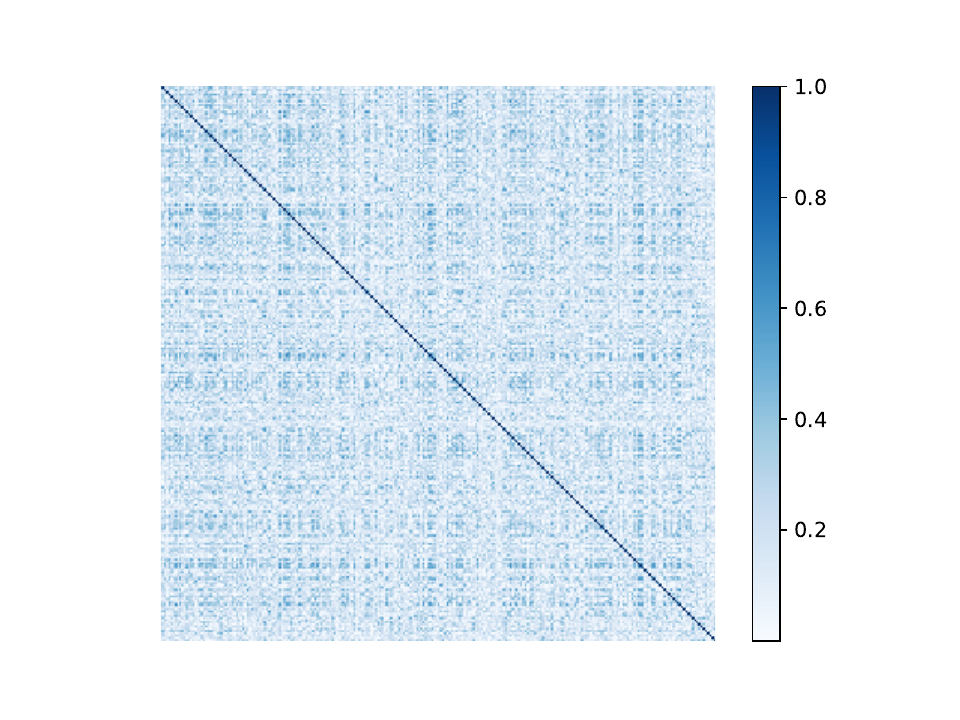}\label{Fig: COV Visualization3}}
    \caption{ Visualization of the covariance matrix (absolute value) of learned representations on Cora.}
    \label{Fig: COV Visualization}
\end{figure}

\subsubsection{Visualization of Feature Distribution on $\mathcal{S}^1$ (\textbf{RQ5})}
To verify the uniformity of node representations learned by SSGE, we visualize feature distributions of all classes and selected specific classes on $\mathcal{S}^1$ using Gaussian kernel density estimation in $\mathbb{R}^2$. Figure \ref{Fig: Uniformity Visualization1} and Figure \ref{Fig: Uniformity Visualization2} summarize the resulting distributions of learned representations on Cora and CiteSeer, respectively. As can be seen, representations learned by SSGE are both aligned (having low intra-class distances) and uniform (evenly distributed on $\mathcal{S}^1$).

\begin{figure}[h]
    \centering
    \subfigure[All Classes]{\includegraphics[width=0.24\linewidth]{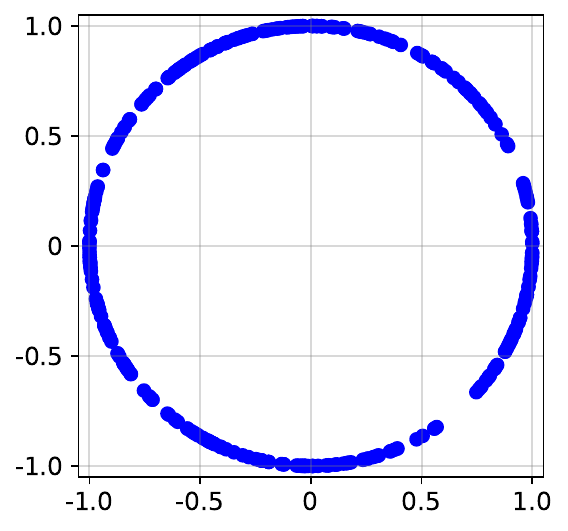}}   
    \subfigure[Class 0]{\includegraphics[width=0.24\linewidth]{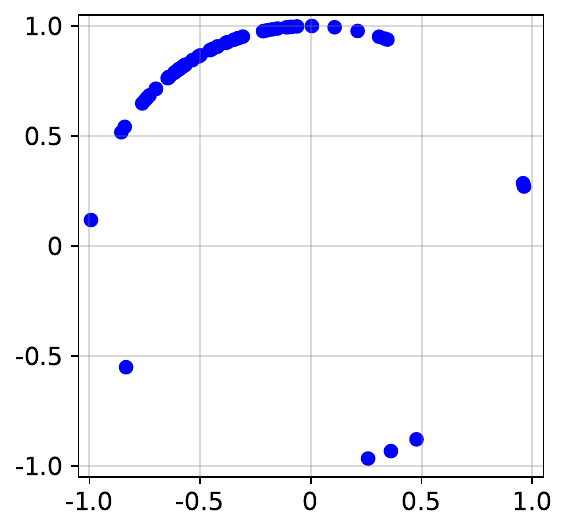}}  
    \subfigure[Class 1]{\includegraphics[width=0.24\linewidth]{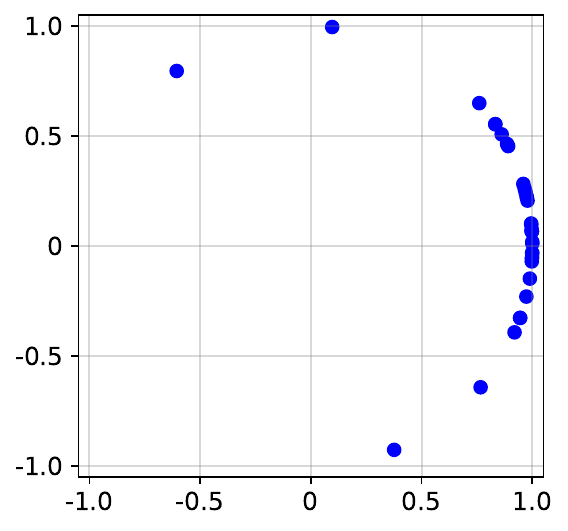}}
    \subfigure[Class 2]{\includegraphics[width=0.24\linewidth]{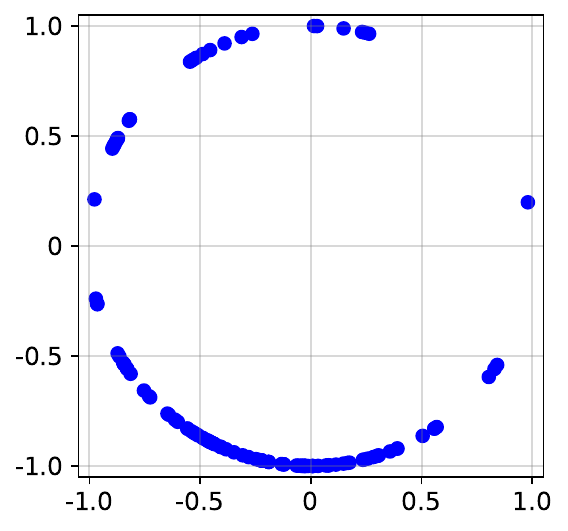}}
    \caption{Visualizing the alignment and uniformity of node representations on Cora.}
    \label{Fig: Uniformity Visualization1}
\end{figure}

\begin{figure}[h]
    \centering
    \subfigure[All Classes]{\includegraphics[width=0.24\linewidth]{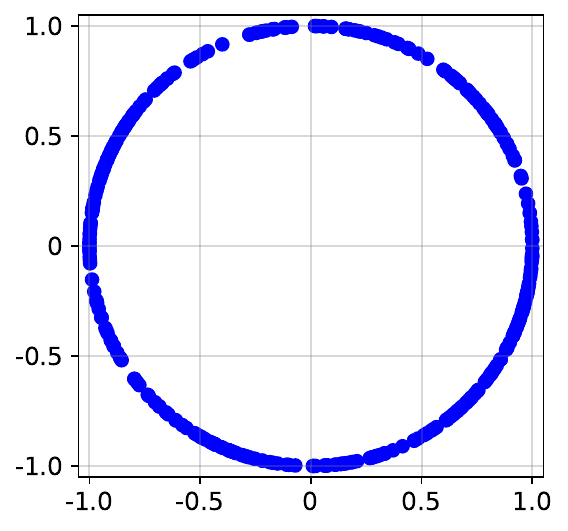}}   
    \subfigure[Class 0]{\includegraphics[width=0.24\linewidth]{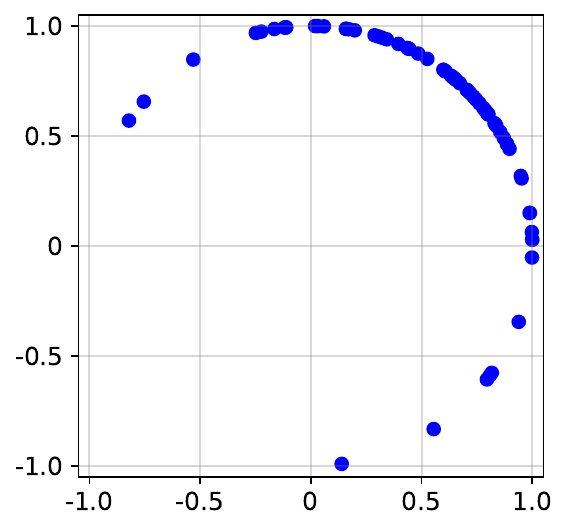}}  
    \subfigure[Class 1]{\includegraphics[width=0.24\linewidth]{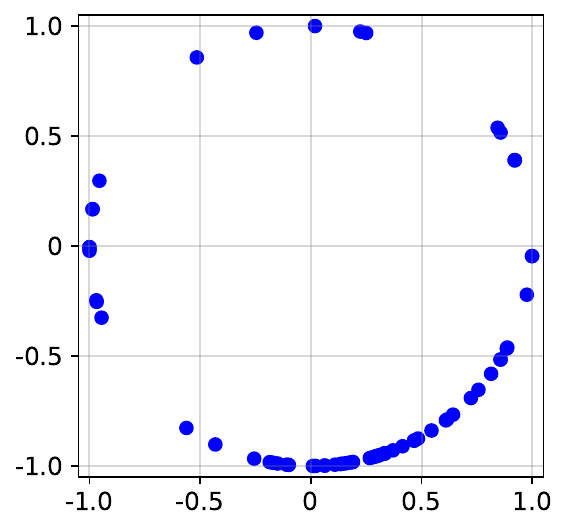}}
    \subfigure[Class 2]{\includegraphics[width=0.24\linewidth]{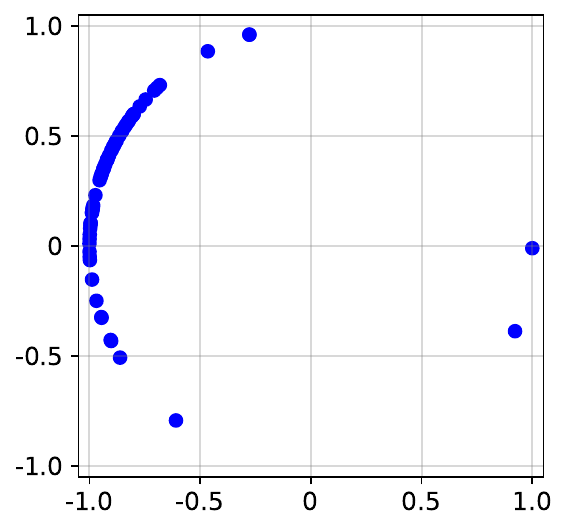}}
    \caption{Visualizing the alignment and uniformity of node representations on CiteSeer.}
    \label{Fig: Uniformity Visualization2}
\end{figure}

\subsection{Hyperparameter Analysis (\textbf{RQ6})}
\subsubsection{Impact of Uniformity Intensity}
We investigate how the intensity of uniformity influences the performance as the trade-off hyperparameter $\lambda$ is varied. Figure \ref{Fig: param_lam} depicts the variation in classification accuracy with different values of $\lambda$ across Cora, WikiCS, Computer, and CoauthorCS. It can be observed that at the beginning, increasing $\lambda$ enhances performance, but excessively large values of $\lambda$ result in a significant degradation of performance. When $\lambda$ is too small, the uniformity term fails to fully leverage its role in promoting uniform representations. Conversely, when $\lambda$ is excessively large, placing too much emphasis on uniformity while neglecting alignment leads to meaningless representations. To choose the value of $\lambda$, we recommend conducting a grid search or a sensitivity analysis on a subset of datasets during the initial experimentation phase. This approach can help identify a suitable range for $\lambda$ that balances performance across different contexts.


\begin{figure}[!ht]
    \centering
    \subfigure[trade-off $\lambda$]{\includegraphics[width=0.495\linewidth]{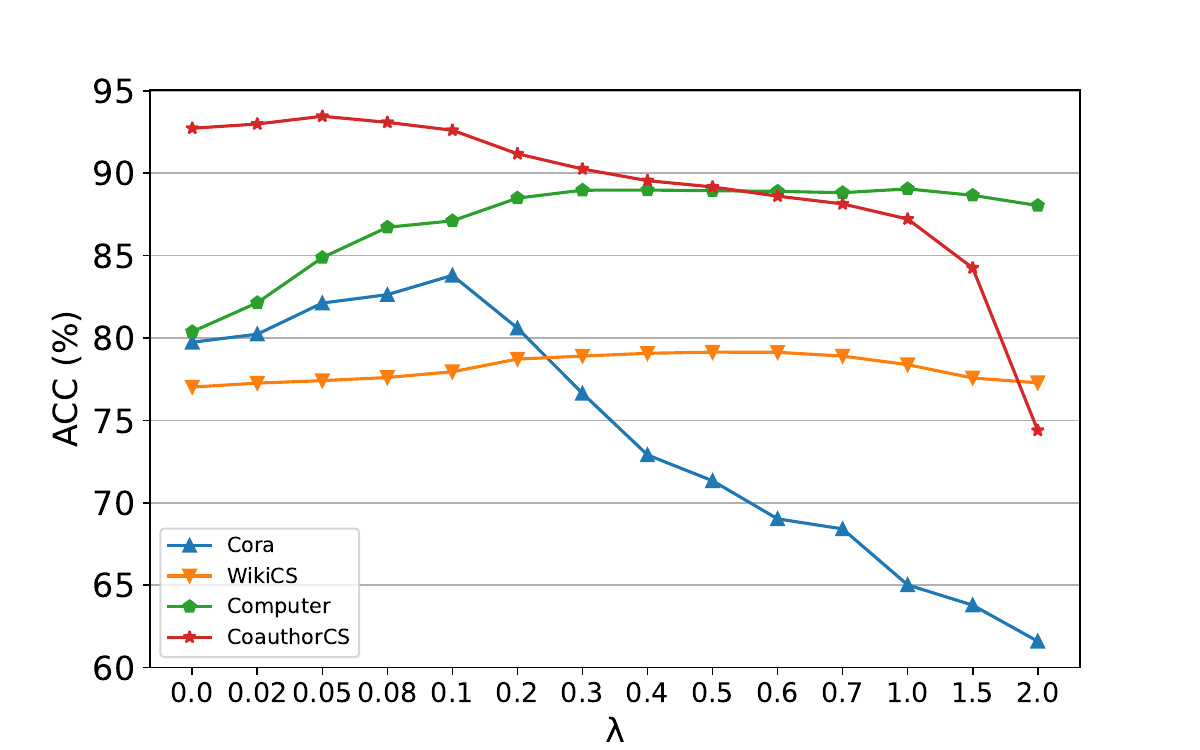}\label{Fig: param_lam}}   
    \subfigure[representation dimension]{\includegraphics[width=0.495\linewidth]{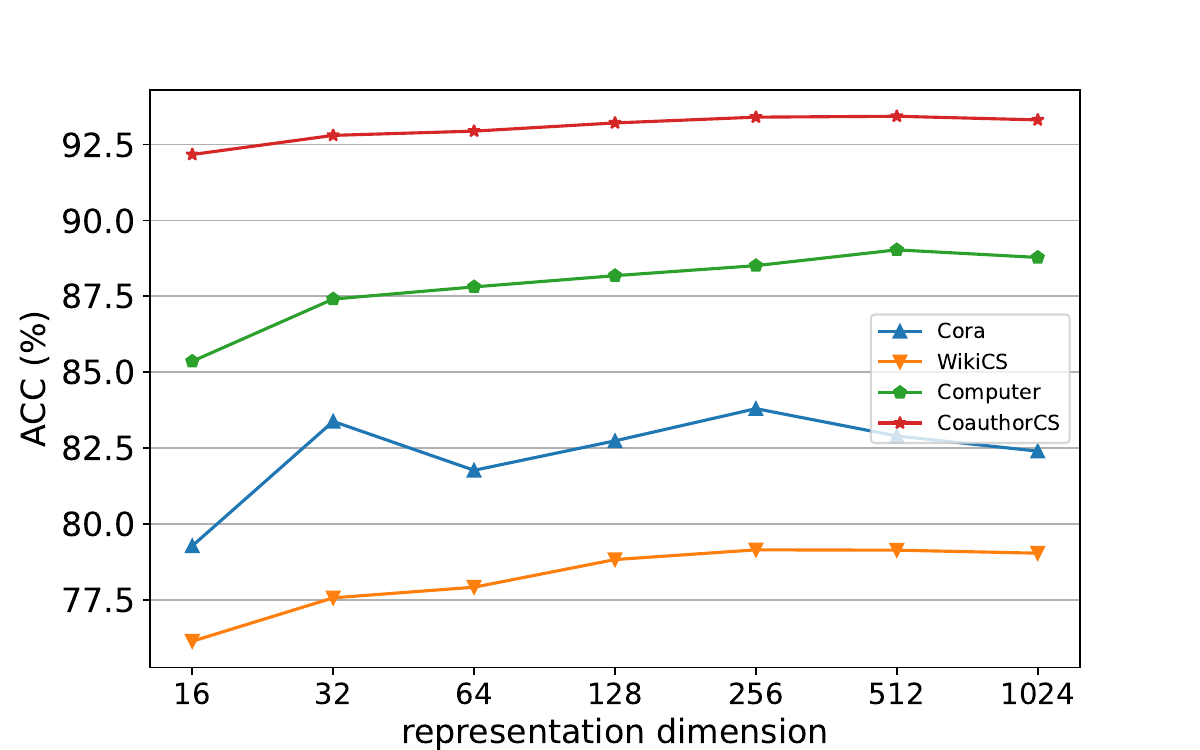}\label{Fig: param_dim}} 
    \caption{Impact of the trade-off $\lambda$ and representation dimension.}
\end{figure}

\subsubsection{Impact of Representation Dimension}
We conduct experiments to explore the impact of varying the representation dimension on performance, as illustrated in Figure \ref{Fig: param_lam}. Similar to other self-supervised methods such as DGI \cite{DGI}, GRACE \cite{GRACE}, and BGRL \cite{BGRL}, our method achieves optimal performance with an appropriately large dimension (usually $256$ or $512$), while a too large dimension $1,024$ results in a slight decrease in performance. In comparison to feature decorrelation methods like CCA-SSG \cite{CCASSG} and G-BT \cite{G-BT}, our method can also work well with a representation dimension smaller than the input dimension, on low-dimensional datasets such as WikiCS and PubMed.

\subsubsection{Impact of Augmentation Intensity}
We further conduct a sensitivity analysis on the augmentation intensity by investigating the effects of different combinations of the feature masking ratio $p_m$ and the edge dropping ratio $p_d$. The results of node classification on Cora, CiteSeer, and WikiCS are presented in Figure \ref{Fig: param_pdpm}. Overall, our method demonstrates robustness to augmentation intensity: within an appropriate range of $p_m$ and $p_d$, our approach consistently achieves competitive results.

\begin{figure}[!ht]
    \centering
    \subfigure[Cora]{\includegraphics[width=0.32\linewidth]{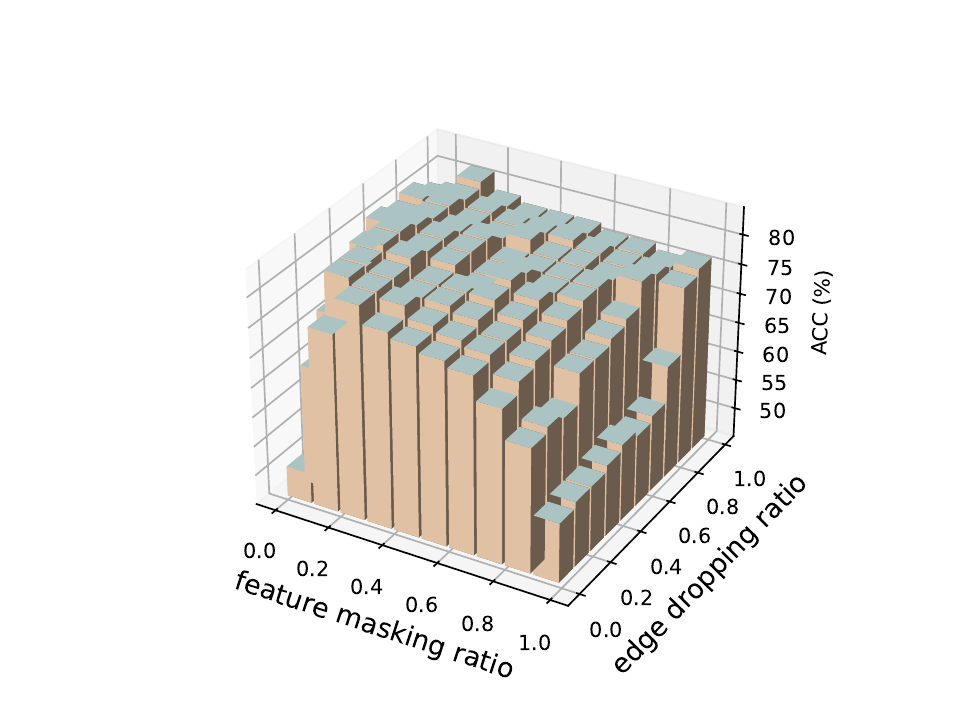}}   
    \subfigure[CiteSeer]{\includegraphics[width=0.32\linewidth]{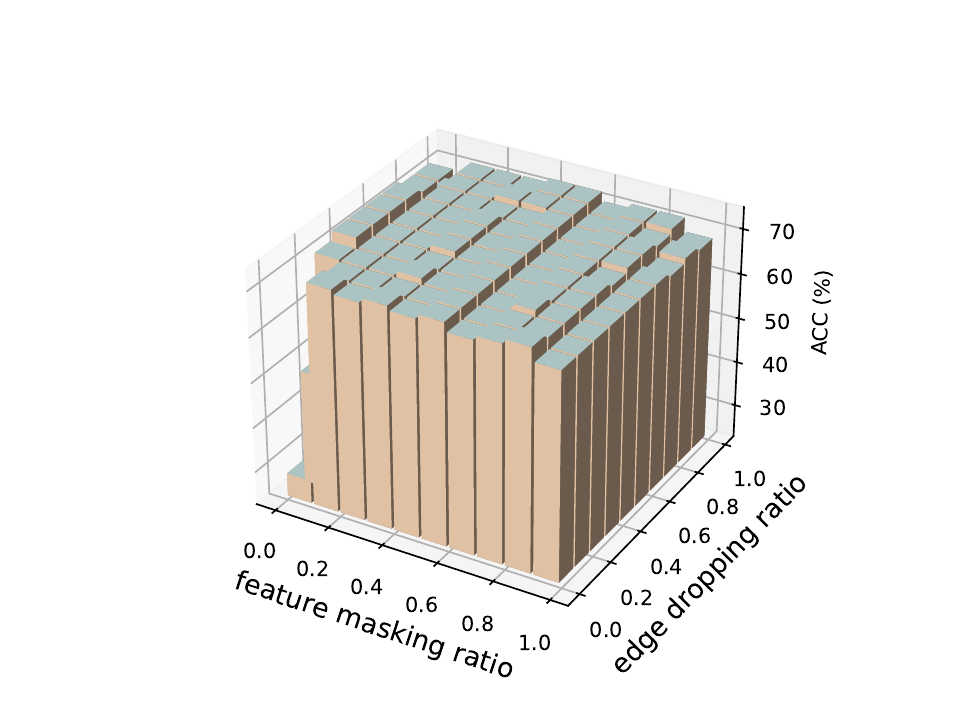}}  
    \subfigure[WikiCS]{\includegraphics[width=0.32\linewidth]{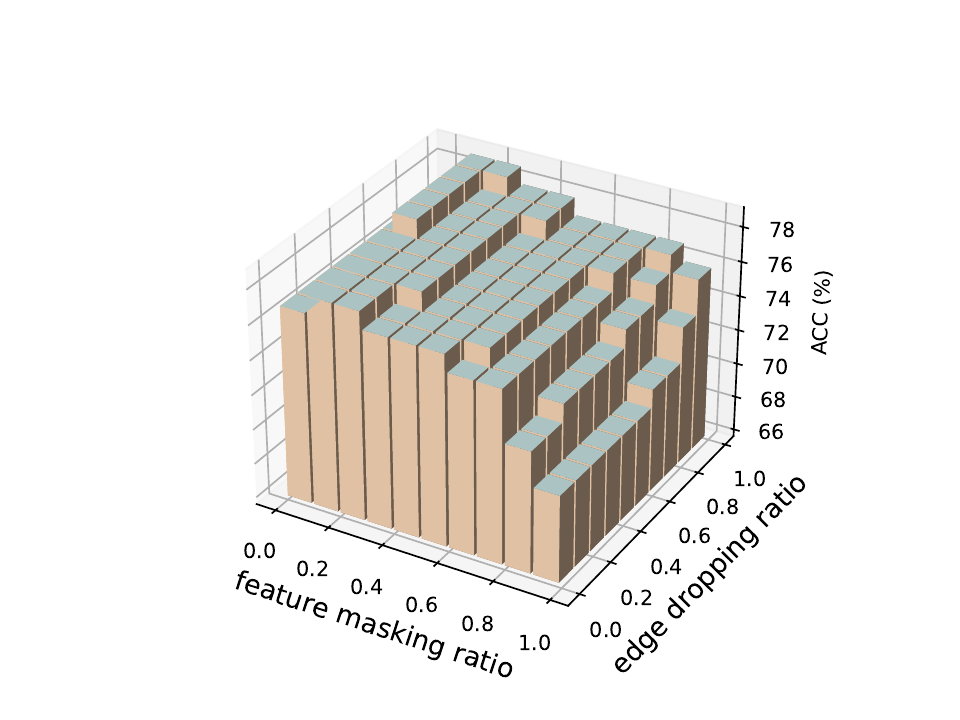}}
    \caption{Impact of the augmentation intensity.}
    \label{Fig: param_pdpm}
\end{figure}


\section{Conclusion}
In this paper, we introduce a negative-free self-supervised objective, drawing inspiration from the fact that the normalized isotropic Gaussian distributed points are uniformly distributed on the unit hypersphere. Our new objective induces a simple and light model without reliance on negative pairs, a parameterized mutual information estimator, an additional projector or asymmetric architectures. Extensive experiments on node classification and node clustering across seven graph benchmarks illustrate that our model achieves competitive performance with fewer parameters, shorter training times, and lower memory costs compared to existing contrastive learning methods.

\noindent\textbf{Limitations and Future Work}: Our empirical studies were limited to static homogeneous graphs. Future work will need to explore extensions to other types of graphs, including heterogeneous and dynamic graphs. Additionally, our framework assumes that the learned representations follow a Gaussian distribution for the sake of simplicity. While our framework has been experimentally validated, it may not be the most suitable distribution in all contexts.

\section*{Acknowledgments}
This work is partially supported by the National Key Research and Development Program of China (2021YFB1715600), the National Natural Science Foundation of China (62306137). 
We would like to express our sincere gratitude to the reviewers, associate editor, and editor-in-chief for their invaluable time and effort dedicated to the evaluation of our manuscript.





\bibliographystyle{elsarticle-num} 
\bibliography{references}






\end{document}